\newcommand{\alg}{{FG}}
\newcommand{\eg}{{\emph{e.g., }}}
\newcommand{\ie}{{\emph{i.e., }}}
\newcommand{\algname}{{Feature Gradients: Scalable Feature Selection via Discrete Relaxation}}
\newtheorem{proposition}{Proposition}[section]
\newtheorem{lemma}{Lemma}[section]
\DeclareMathOperator*{\argmin}{arg\,min}
\title{\large\algname}
\author{%
    Rishit Sheth, Nicolo Fusi \\
    Microsoft Research New England \\
    Cambridge, MA 02142 \\
    \texttt{\{rishet,fusi\}@microsoft.com} \\
}
\begin{document}

\maketitle

\begin{abstract}
    In this paper we introduce Feature Gradients, a gradient-based search algorithm for feature selection.
    Our approach extends a recent result on the estimation of learnability in the sublinear data regime by showing that the calculation can be performed
    iteratively (\ie in mini-batches) and in linear time and space with respect to both the number of features $D$ \emph{and} the sample size $N$.
    This, along with a discrete-to-continuous relaxation of the search domain, allows for an efficient, gradient-based search algorithm among feature subsets for very large datasets.
    Crucially, our algorithm is capable of finding higher-order correlations between features and targets for both the $N>D$ and $N<D$ regimes, as opposed to approaches that do not consider such interactions and/or only consider one regime.
    We provide experimental demonstration of the algorithm in small and large sample- and feature-size settings.

\end{abstract}

\section{Introduction}
We consider the problem of feature selection in supervised learning tasks.
Feature selection remains a crucial step in machine learning pipelines and continues to see active research (e.g., \cite{Aghazadeh2018mission,Abid2019concrete}).
Earlier application of feature selection methods, such as those in computational biology, considered settings where the number of features available was large and the sample size was relatively small. The goal was to choose a small subset of features with good explanatory power and doing so required \emph{statistically efficient} feature selection methods. 
More recent applications of feature selection methods (\eg in natural language processing) are characterized both by large sample sizes and large feature spaces, making \emph{computational efficiency} another key requirement.

To address these challenges, several feature selection methods have been proposed over the years. These methods can be broadly categorized as either wrapper or filter methods.
Wrapper methods utilize a prescribed model to select features by training and evaluating the model on different feature subsets \citep{Kohavi1997wrappers}.
Filter methods utilize a computationally cheap evaluation criterion to rank features individually or in certain combinations \citep{Yu2003feature}.
In addition to hybrid combinations of these basic approaches, embedded feature selection methods that perform joint model selection and training have also been proposed. An example of such hybrid methods is the lasso \citep{Tibshirani1996regression}.
Generally, wrapper methods give better results since they directly evaluate the final supervised learning task.
However, they are also the most computationally demanding, since they require training and evaluating a model per proposed feature subset.
Filter methods are considered computationally cheaper alternatives with better scaling, but lack a principled means of considering feature interactions without either dramatically increasing the computational cost (e.g., by directly computing pairwise and higher-order interaction terms) or employing a heuristic search among feature subsets.

In this paper, our goal is to develop a feature selection method that is statistically efficient (\ie identifies a subset of features highly correlated with the target in $N < D$ settings), and computationally efficient (\ie scales to millions of samples and features).
Ideally, the method should consist of primitive operations that can be accelerated using GPUs.

We build our method starting from a recent result by \cite{Kong2018estimating} in which the authors present an estimator for the accuracy induced by a set of features in a linear model.
While this estimator has desirable statistical properties, it can only assess the ``quality'' of a given set of features and doesn't prescribe a procedure for selecting a specific set of features.
In the following, we treat feature selection as a combinatorial optimization problem and employ a discrete relaxation
to efficiently traverse the space of possible feature sets.
Our algorithm combines the strong statistical properties of the \cite{Kong2018estimating} estimator with the ability to search over complex optimization spaces via standard gradient descent.
Specifically, the method we propose has the following properties:
\begin{itemize}
    \item \emph{Low computational complexity}. The complexity scales linearly both with the sample size and the number of features, enabling feature selection even in large datasets.
    \item \emph{High statistical efficiency}. The estimator accuracy degrades sublinearly with increasing dimension allowing for accurate search over large featurization spaces consisting of millions of features. %
        For example, increasing the number of features by a factor $\gamma>1$ only requires a factor of $\sqrt{\gamma}$ increase in the number of samples to maintain the same level of estimator accuracy.
    \item \emph{Efficient detection of higher-order interactions}. 
        Modeling higher-order feature interactions incurs a constant increase in time and space complexity with with each additional order.
\end{itemize}

\section{Feature Gradients}
\label{sec:fg}

Our algorithm is based on recent work on the learnability of linear models by \cite{Kong2018estimating} which provides high probability bounds on the performance of linear predictors in high-dimensional regression (residual variance) and classification (accuracy).
In particular, the bounds for the estimators they introduce have a sublinear dependence on dimension, highlighting the possibility of making well-informed decisions on which feature subset would be most useful for a dataset on a task with relatively little data.
We start by giving a brief description of these estimators, moving on to show how they can be modified and utilized for feature selection by means of optimization over a continuous space.
For clarity of exposition, we focus on the regression case, but the same ideas and approach hold for classification.

\newcommand{\xd}{X_{\text{:}d}}

Given inputs $X\in \mathbb{R}^{N \times D}$, targets $y\in \mathbb{R}^N$, a positive integer $k$, and $\{a_i\}_{i=0}^{k-1}$ with $a_i\in \mathbb{R}$, the residual variance estimate for a subset of features denoted by $s\in \{0,1\}^D$ for the regression model of \cite{Kong2018estimating} is given by
\begin{equation}
    f(s) = \frac{y^\top y}{N} - \sum_{i=0}^{k-1} \frac{a_i}{\binom{N}{i+2}} y^\top \text{triud}(X \text{diag}(s) X^\top)^{i+1} y,
    \label{eq:estimator}
\end{equation}
where $\text{triud}(\cdot)$ denotes the operation that zeros out the lower triangular portion and diagonal entries of a square matrix.
The assumptions for the sublinear learnability bound include i.i.d.\ examples/labels and bounded moments\footnote{Specifically, the 2nd and 4th order moments for the examples and variance for the noise are assumed to be bounded.}.
The parameter $k$ and coefficients $\{a_i\}$ are related to estimator order.
From \cref{eq:estimator}, it is clear that increasing $k$ results in higher-order interaction terms between features being taken into account in the residual variance computation.
First, we show that \Cref{eq:estimator} can be efficiently computed: %
\begin{lemma}
    \label{alg-cost}
    The function \Cref{eq:estimator} requires $\mathcal{O}(ND)$ time and $\mathcal{O}(N)$ space.
\end{lemma}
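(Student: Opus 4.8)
The plan is to never materialize the $N\times N$ matrix $M := X\,\text{diag}(s)\,X^\top$ appearing in \cref{eq:estimator}: forming it explicitly already costs $\mathcal{O}(N^2 D)$ time and $\mathcal{O}(N^2)$ space, so the whole point is to work with it implicitly. Write $U := \text{triud}(M)$ for its strict upper-triangular part. Apart from the term $y^\top y / N$, which costs $\mathcal{O}(N)$ time and $\mathcal{O}(1)$ extra space, the only nontrivial quantities are the scalars $y^\top U^{i+1} y$ for $i = 0,\dots,k-1$; the weights $a_i / \binom{N}{i+2}$ are $\mathcal{O}(1)$ each, since $i$ is bounded and $\binom{N}{i+2}$ is a fixed-degree polynomial in $N$ (and $k = \mathcal{O}(1)$ throughout).

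The heart of the proof is the claim that the linear map $w \mapsto Uw$ can be evaluated in $\mathcal{O}(ND)$ time and $\mathcal{O}(N)$ working space, treating $X$ as read-only input scanned one feature-column at a time. Expanding entrywise, $(Uw)_a = \sum_{b>a} M_{ab} w_b = \sum_{d=1}^D s_d\, X_{ad}\, g_{ad}$ with $g_{ad} := \sum_{b>a} X_{bd} w_b$; for a fixed $d$ these suffix sums obey the backward recursion $g_{Nd} = 0$, $g_{ad} = g_{a+1,d} + X_{a+1,d}\, w_{a+1}$, so a single reverse scan produces the whole vector $g_{\cdot d}$ in $\mathcal{O}(N)$ time and $\mathcal{O}(N)$ space. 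Looping over $d$, adding $s_d X_{ad} g_{ad}$ into a length-$N$ output buffer and reusing the scan buffer each time, yields $Uw$ at the stated cost. (For $i=0$ one can shortcut even further: $y^\top U y = \tfrac{1}{2}\sum_d s_d\big[(\sum_a X_{ad} y_a)^2 - \sum_a X_{ad}^2 y_a^2\big]$, which is the same prefix/suffix-sum identity in disguise.)

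Granting this primitive, the lemma is immediate: set $v_0 := y$ and iterate $v_j := U v_{j-1}$ for $j = 1,\dots,k$, so that $v_j = U^j y$; each iteration is one evaluation of the $\mathcal{O}(ND)$-time, $\mathcal{O}(N)$-space map above, the two buffers $v_{j-1}, v_j$ are reused, and $y$ is retained, so the chain costs $\mathcal{O}(kND) = \mathcal{O}(ND)$ time and $\mathcal{O}(N)$ space. Finally $y^\top U^{i+1} y = \langle y, v_{i+1}\rangle$ is one length-$N$ inner product, and $f(s)$ is assembled from $y^\top y / N$ and $k$ scalar-weighted such inner products, all within the same budget.

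I expect the only genuine obstacle to be the triangular-truncation step. The raw matrix $M$ has rank at most $D$, so it already supports a trivial $\mathcal{O}(ND)$ matvec via $w \mapsto X(s \odot (X^\top w))$; applying $\text{triud}(\cdot)$ destroys that factorization, and the crux is recognizing that truncation to the strict upper triangle is precisely a suffix-sum along the sample index, which is exactly the operation that survives $\mathcal{O}(ND)$-time, $\mathcal{O}(N)$-space evaluation. Everything after that — the power iteration, the inner products, the scalar bookkeeping, and checking that no intermediate object ever exceeds $\mathcal{O}(N)$ auxiliary memory — is routine.
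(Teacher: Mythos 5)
Your proof is correct and follows essentially the same route as the paper: the column-wise decomposition $(Uw)_a = \sum_d s_d X_{ad} g_{ad}$ with suffix sums $g_{ad}$ is exactly the paper's observation that $\text{triud}(X\,\text{diag}(s)\,X^\top) = \sum_d s_d\,\text{triud}(X_{\text{:}d} X_{\text{:}d}^\top)$ combined with its reverse-cumulative-sum evaluation of each rank-one term, and the repeated application of the matvec for higher powers matches the paper's iteration argument. No gaps.
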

The following proposition is useful in proving \Cref{alg-cost}.
\begin{proposition}
    \label{basic-feature-ops}
    For $z\in \mathbb{R}^N$ %
    and $y\in \mathbb{R}^N$,
    the operation
    $\text{triud}(z z^\top)y$
    requires $\mathcal{O}(N)$ time and space.
\end{proposition}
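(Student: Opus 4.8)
The plan is to avoid ever materializing the $N \times N$ matrix $zz^\top$, and instead exploit its rank-one structure together with a cumulative-sum (prefix-sum) trick. First I would write down the entries of $\text{triud}(zz^\top)$ explicitly: since $(zz^\top)_{ij} = z_i z_j$ and $\text{triud}(\cdot)$ retains only the strictly upper triangular part, we have $\text{triud}(zz^\top)_{ij} = z_i z_j$ when $i < j$ and $0$ otherwise.

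Next I would expand the matrix-vector product component-wise:
\begin{equation}
    \bigl(\text{triud}(zz^\top)y\bigr)_i = \sum_{j=i+1}^{N} z_i z_j y_j = z_i \sum_{j=i+1}^{N} z_j y_j .
\end{equation}
Define $w \in \mathbb{R}^N$ by $w_j = z_j y_j$ (an elementwise product, computable in $\mathcal{O}(N)$ time and space), and let $S_i = \sum_{j=i+1}^{N} w_j$ denote the strict suffix sums of $w$. These satisfy the backward recurrence $S_N = 0$ and $S_{i-1} = S_i + w_i$, so the entire vector $S$ is obtained by a single reverse pass over $w$ in $\mathcal{O}(N)$ time and $\mathcal{O}(N)$ space. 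The desired output is then the elementwise product with $z$, i.e.\ $z_i S_i$, which is another $\mathcal{O}(N)$ operation.

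Putting these steps together yields the claim: the computation touches only length-$N$ vectors ($z$, $y$, $w$, $S$, and the output) and never the quadratic-size matrix, giving $\mathcal{O}(N)$ space; and it consists of a constant number of elementwise vector operations plus one cumulative sum, giving $\mathcal{O}(N)$ time. I do not anticipate a genuine obstacle here — the only points requiring care are the strictness of the upper-triangular region (the inner sum starts at $j = i+1$ rather than $j = i$, since the diagonal is zeroed), and making explicit that it is this rank-one-plus-suffix-sum reformulation that replaces the naive $\mathcal{O}(N^2)$ evaluation. This proposition then serves as the primitive from which the $\mathcal{O}(ND)$ bound of \Cref{alg-cost} is assembled.
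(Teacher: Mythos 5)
Your proof is correct and follows essentially the same route as the paper: write the $i$-th output component as $z_i\sum_{j>i} z_j y_j$, form the elementwise product $w = z\circ y$, compute its strict suffix sums with one reverse cumulative pass, and multiply elementwise by $z$, never materializing the $N\times N$ matrix. The only cosmetic difference is that the paper displays the expanded vector explicitly while you give the componentwise formula and recurrence; the argument and the $\mathcal{O}(N)$ accounting are identical.
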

\begin{proof}
    Expanding the first operation, we have
    \begin{equation*}
        \text{triud}(z z^\top)y
        =
        \begin{pmatrix}
            z_{1}z_{2}y_2 + z_{1}z_{3}y_3 + \dots + z_{1}z_{N}y_N \\
            z_{2}z_{3}y_3 + \dots + z_{2}z_{N}y_N \\
            \vdots \\
            z_{N-1}z_{N}y_N \\
            0
        \end{pmatrix}.
    \end{equation*}
    Letting $u=z \circ y$,
    \begin{equation*}
        \text{triud}(z z^\top)y
        =
        \begin{pmatrix}
            z_{1} \sum_{i=2}^N u_{i} \\
            z_{2} \sum_{i=3}^N u_{i} \\
            \vdots \\
            z_{N-1} u_{N} \\
            0
        \end{pmatrix},
    \end{equation*}
    where the elements $(\sum_{i=2}^N u_{i}, \sum_{i=3}^N u_{i}, \dots, u_{N}, 0 )$ can be calculated from $u$ with a (reverse) cumulative sum in $\mathcal{O}(N)$.
    Since computing $u$ and performing the inner product between $z$ and $(\sum_{i=2}^N u_{i}, \sum_{i=3}^N u_{i}, \dots, u_{N}, 0 )^\top $ are also $\mathcal{O}(N)$, the total cost is linear in $N$.
\end{proof}

\begin{proof}[Proof of \Cref{alg-cost}]
For $X\in \mathbb{R}^{N \times D}$, let $\xd$ for $1\le d \le D$ represent the $d$-th column of $X$.
Then, note
\begin{align}
    y^\top \text{triud}(X \text{diag}(s) X^\top)^{i+1} y
    & =
    y^\top
    \Big(
        \sum_{d=1}^D s_d \text{triud}(\xd \xd^\top)
    \Big)^{i+1}
    y
    \\
    & = 
    y^\top
    \underbrace{
        \Big(
            \sum_{d=1}^D s_d G_d
        \Big)
        \cdots
    }_{i~\text{terms}}
    \Big(
        \sum_{d=1}^D s_d G_d
    \Big)
    y
    ,
\end{align}
where $G_d\triangleq \text{triud}(\xd \xd^\top)$. %
From \Cref{basic-feature-ops}, it follows that computing $\sum_{d=1}^D s_d G_d y$ requires $\mathcal{O}(ND)$ time and $\mathcal{O}(N)$ space.
This operation can be iterated $i$ additional times without increasing the time or space complexity.
\end{proof}

Therefore, \Cref{eq:estimator} can be utilized as an efficiently computable filter criterion.
However, given a prescribed number of desired features $1\le d\le D$, the use of \Cref{eq:estimator} for \emph{search} over
$s\in\{0,1\}^D$ subject to $\Vert s \Vert_0 = d$
is limited since the general problem is NP-hard \citep{Natarajan1995sparse}.
To achieve an approximate, tractable solution, we relax the discrete search domain to a continuous domain via the transformation $s=\sigma(v)$ where $v\in\mathbb{R}^D$ and the ``squashing'' function $\sigma(\cdot):\mathbb{R}\mapsto[0,1]$ is applied element-wise.
The relaxed, unconstrained optimization is then given by
$    \argmin_{v\in \mathbb{R}^D} f(\sigma(v))$,
which is amenable to solution by gradient-based search in $v$.
Further, mini-batches of data can be utilized for increased efficiency. %
In our experiments, we enforce model parsimony with
the penalty term $\tfrac{\lambda}{D} \Vert s \Vert_1=\tfrac{\lambda}{D}\sum_{d=1}^D \sigma(v_d)$ where
selection of the regularization parameter $\lambda>0$ can be accomplished with grid search evaluated on a validation set. %

We note that (i) the classification bound of \cite{Kong2018estimating} assumes Gaussianity of the examples, and (ii) the objective (plus penalty) is non-convex and likely has many local optima.
However, as we demonstrate in the experiments, the algorithm is able to identify better feature subsets than competing methods in practical applications, where these conditions may not hold. %
\section{Related work}

The research on feature selection is vast, and we cannot hope to provide a list here with any claim to being comprehensive.
Instead, we highlight relevant work with a focus on more recent scalable filter approaches.
The interested reader can refer to \cite{Guyon2003introduction} for a broader overview and
\cite{Saeys2007review} and \cite{Forman2003extensive} for domain-specific reviews in key application areas. %

As mentioned previously, wrapper methods generally produce high quality feature subsets since they optimize directly for the desired supervised learning task.
However, here, we seek a general scalable solution that can support arbitrary follow-on tasks.
Moreover, in the $N\ll D$ regime, approaches that attempt to train a model risk overfitting or underfitting.
In contrast, we are proposing method that produces high-quality estimates of feature subset performance.

Standard filter criteria for classification include the one-way ANOVA test among class means and mutual information (computed per feature dimension).
Like \alg{}, the ANOVA computation is linear in sample size and dimension, but is ``greedy'' in selection in that features are evaluated in isolation.

The filter algorithm of \cite{Yu2003feature,Yu2004efficient} utilizes a normalized mutual information as correlation measure to first identify target-correleated features, and then, from this set, perform rounds of elimination of redundant features, where redundant is also defined w.r.t.\ the same correlation measure.
In this sense, the method does consider pairwise feature interactions, but to \emph{eliminate} features, not to \emph{identify} groups of correlated features.
Assuming an average case performance of eliminating half of all target-correlated features per round, the algorithm complexity is $\mathcal{O}(N D\log D)$; the worst-case complexity is $\mathcal{O}(N D^2)$.

Concrete and related relaxations developed for latent variable models \citep{Jang2016categorical,Maddison2016concrete} offer an avenue for embedded feature selection.
Of these, the closest work is that of \cite{Abid2019concrete} which presents an unsupervised approach that selects a prescribed number $d$ of features by learning $d$ (parallel) Concrete distributions over the input features.
Training an autoencoding neural network architecture yields a subset of features that are useful for reconstructing the full-dimensional inputs, and, to an extent, supporting follow-on supervised learning tasks.
However, a neural network architecture must be specified per problem instance and trained.
Convex relaxations of ``$\ell_0$''-regularized approaches (e.g., \cite{Tibshirani1996regression}) relax the objective function rather than the search space.
Our particular relaxation is more in line with that of \cite{Liu2018darts}.

Of the recent approaches for streaming feature selection \citep{Sun2009local,Yu2014towards}, the MISSION algorithm \citep{Aghazadeh2018mission} most closely considers the same setting we do.
MISSION utilizes a Count-Sketch data structure to store a running, compressed gradient of a generalized linear model which provides scalability to very high dimensional datasets.
However, the gradient information does not take into account higher-order feature interactions.

\section{Experiments}
We validate the performance of our method for classification tasks in both small and large datasets.
Small datasets allow us to carefully examine the performance of the different algorithms we consider in cases where $N\ll D$, whereas the larger datasets are useful to demonstrate scalability to large sample size and large dimension, in both the $N>D$ and $N<D$ regimes.
In the small datasets setting, we compare Feature Gradients to traditional filter methods%
{. In} the large-scale setting %
{, where traditional filter methods would be too computationally expensive,} we compare to MISSION \citep{Aghazadeh2018mission}, a recently-proposed algorithm that can perform efficient feature selection by means of count sketching.
The datasets used in these experiments are summarized in \Cref{tab:datasets}.

The evaluation methodology consisted of first running our method and the competitors on train to yield feature subsets of varying sizes. Then, a logistic regression model is fit per subset with train features (batch-trained for small datasets, SGD-trained for large datasets). Finally, the fitted models are evaluated on test with AUC as the performance metric. In this way, performance curves of AUC as a function of feature subset size are recorded per dataset.

\paragraph{\alg{} implementation.}
Prior to performing the continuous search, we
(i) center the data matrix, $X$, per dimension, %
and
(ii) estimate the largest singular value of the covariance of $X$ and divide the centered data matrix by the square root of this value.
Both of these operations are performed in order to meet the requirements of the \cite{Kong2018estimating} estimator.
Specifically, dividing by the square root of the largest singular value allows a single set of degree-$k$ polynomial coefficients to be used for all input data matrices.
The centering and singular value estimation are performed on the entire data matrix except in the cases of \emph{webspam} and \emph{criteo} where it is estimated from a sub-sample of size 10,000.
Adam \citep{Kingma2014adam} with learning rate $10^{-1}$ (and other parameters set to default) was used in all cases.
For the small data experiments, the stopping criteria were 1000 maximum iterations or the relative change in objective function dropping below $10^{-5}$. %
In the large-scale experiments, training was performed for one epoch.
Mini-batch sizes are described in the relevant sections.
The squashing function was $\sigma(2x)$ where $\sigma(\cdot)$ is the sigmoid function (this choice of squashing function is equivalent to $\tfrac{1}{2}($tanh$(x)+1)$).

\begin{table}
\begin{center}
\begin{tabular}{lccc}
    Dataset & $D$ & $N_\text{train}$ & $N_\text{test}$\\
    \hline
    \emph{mnist35} & 784 & 11,552 & 1902\\
    \emph{gisette} & 5000 & 6000 & 1000\\
    \emph{rcv1} & 47,236 & 20,242 & 677,399\\
    \emph{webspam} & 16,609,143 & 280,000 & 70,000\\
    \emph{criteo} & 1,000,000 & 45,840,617 & 6,042,135\\
\end{tabular}
\caption{Binary classification datasets.}
\label{tab:datasets}
\end{center}
\end{table}

\paragraph{Small datasets.}

The smaller dimensional datasets are:
\begin{itemize}
    \item \emph{mnist35}. %
        Classification of 3s vs. 5s in the MNIST dataset with random noise added to produce a more challenging scenario.
        Samples are shown in Figure \ref{fig:exp-mnist-example}.
    \item \emph{gisette}. Used in the 2003 NIPS feature selection challenge.
\end{itemize}

Beyond benchmarking the performance of our method, the goal of this experiment is to provide an intuitive visual representation of the features selected by \alg{}.
We first discuss performance on the \emph{mnist35} dataset.

\begin{figure}
    \centering
    \includegraphics[width=0.18\linewidth]{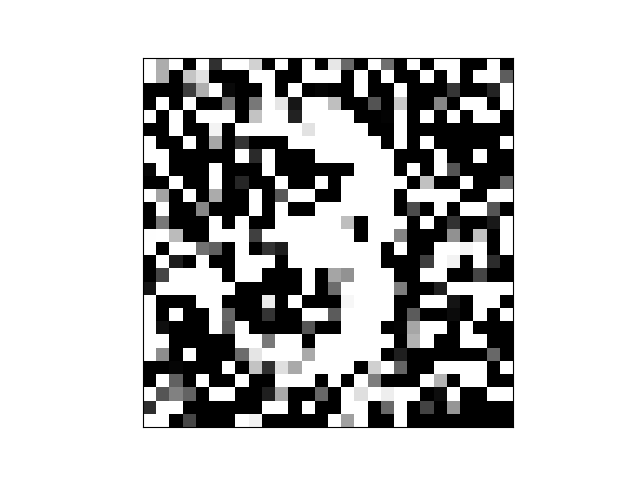}
    \includegraphics[width=0.18\linewidth]{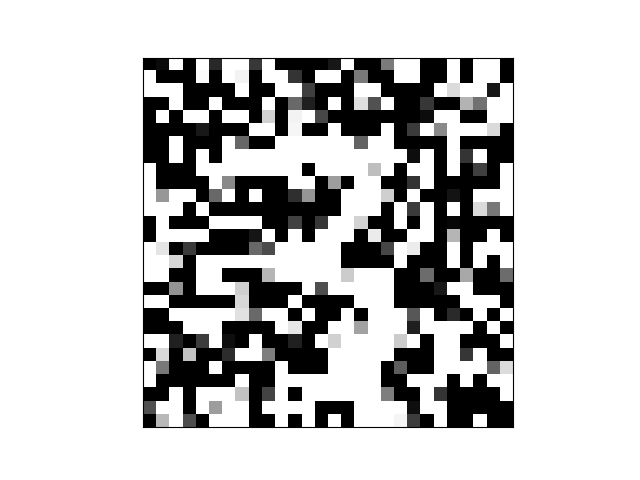}
    \includegraphics[width=0.18\linewidth]{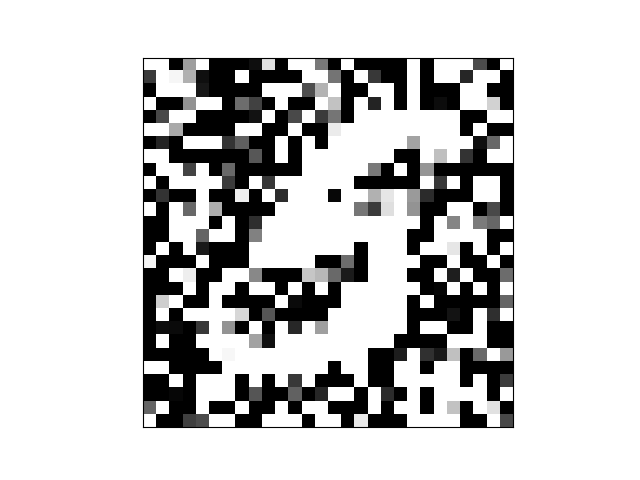}
    \includegraphics[width=0.18\linewidth]{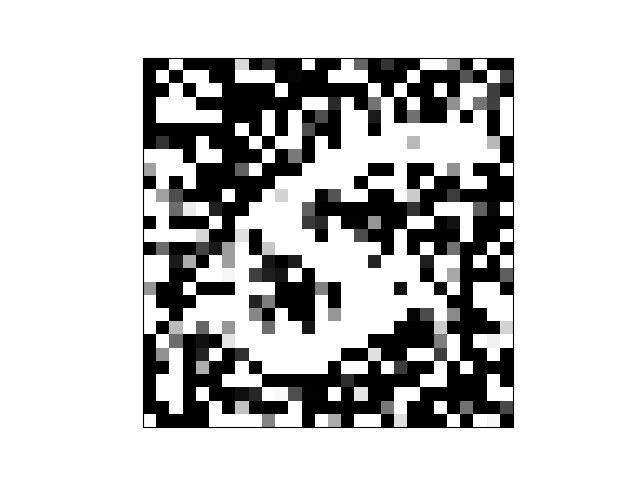}
    \includegraphics[width=0.18\linewidth]{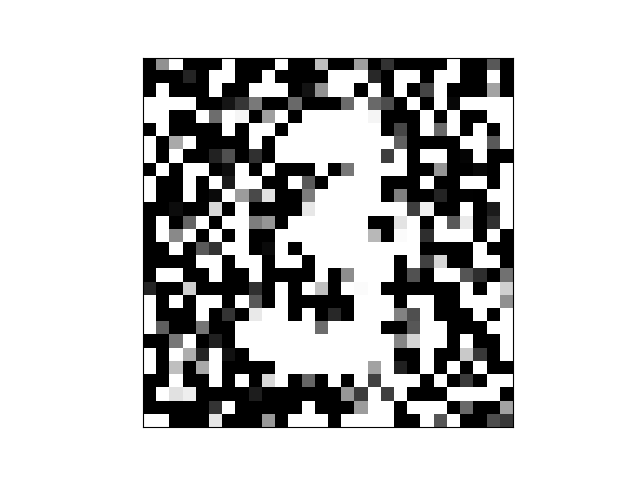}
    \\
    \includegraphics[width=0.18\linewidth]{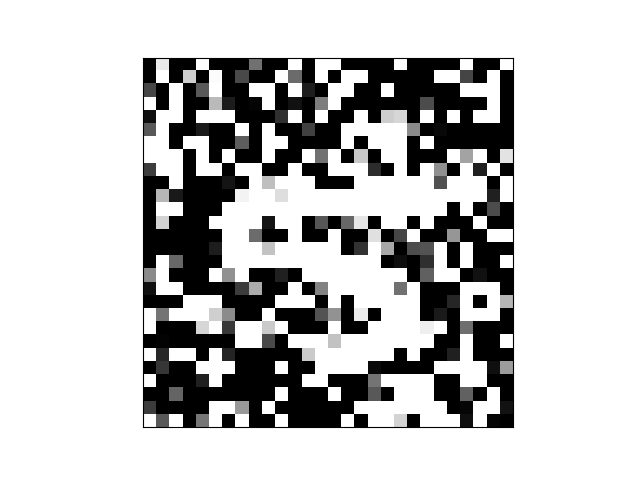}
    \includegraphics[width=0.18\linewidth]{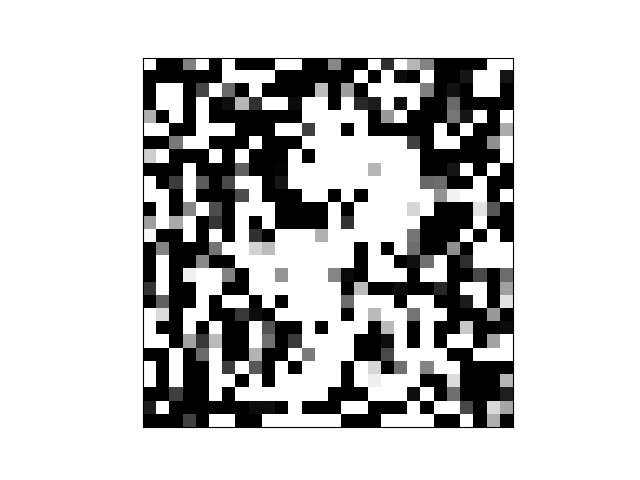}
    \includegraphics[width=0.18\linewidth]{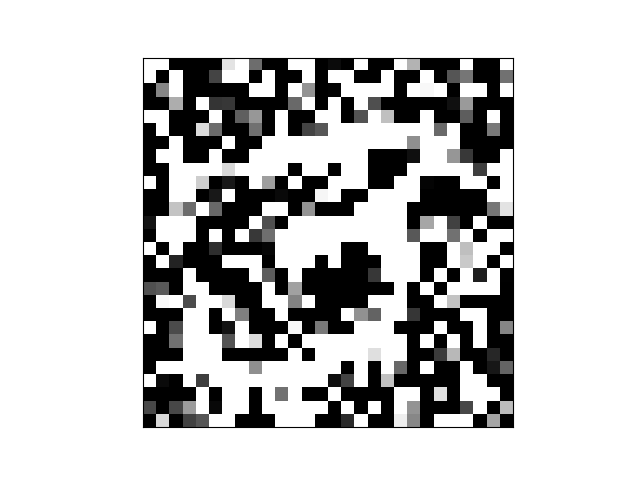}
    \includegraphics[width=0.18\linewidth]{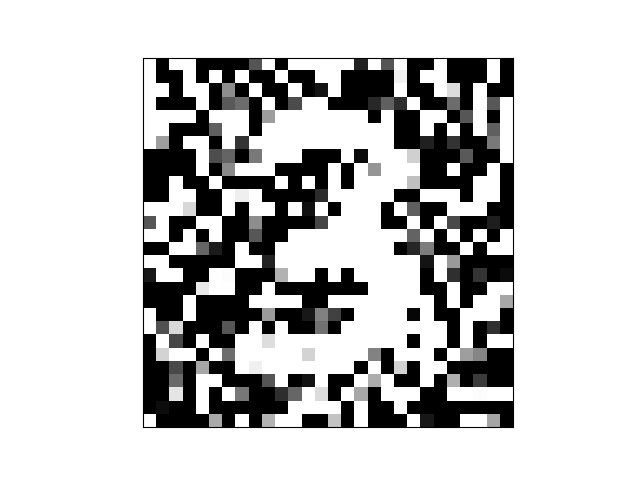}
    \includegraphics[width=0.18\linewidth]{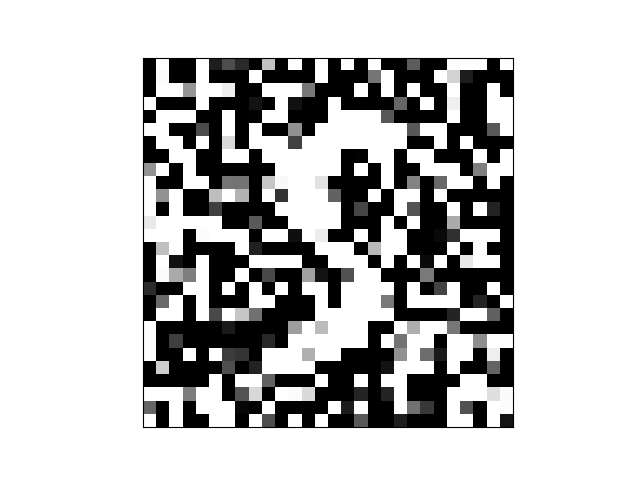}
    \caption{\emph{mnist35} input examples.}
    \label{fig:exp-mnist-example}
\end{figure}

Figure \ref{fig:exp-mnist-gisette} compares the performance of our method with two filter-based methods implemented in scikit-learn.
One computes ANOVA F-values between features and targets (SKF) and the other computes mutual information (SKMI). %
Our method significantly outperforms both baselines across a range of feature subset sizes.
As a whole, the improvement given by \alg{} is statistically significant at a significance level of $0.01$ with a p-value $<10^{-4}$, based on a paired t-test between AUCs obtained by each baseline and AUCs obtained by our method. %

In \Cref{fig:mnist-qual-feat-sel-diff}, we highlight which features within the images are selected by our algorithm. In particular, we plot values of the parameter for the unconstrained optimization, $v$, for different values of $\lambda$, resulting in different amounts of sparsity. Across all the sparsity levels we considered, the features selected by our method cluster in the top half of the image. Indeed, it's easy to see that the most useful features to distinguish the digit ``3'' from the digit ``5'' are those located where the ``upper loop'' in the digit closes or opens. In constrast, the features selected by the filter-based methods tend to cluster together owing to the greedy selection of single features at a time.

Finally, Figure\ref{fig:order} right shows that increasing the order of the estimator results in increasingly better results. In particular, we found that the improvement between order 6 and order 1 is statistically significant at a significance %
{level of $0.05$ with a p-value of $0.043$ for \emph{gisette} and $< 10^{-4}$ for \emph{mnist35}}.

\begin{figure}
    \centering
    \begin{subfigure}{0.475\linewidth}
        \includegraphics[width=\linewidth]{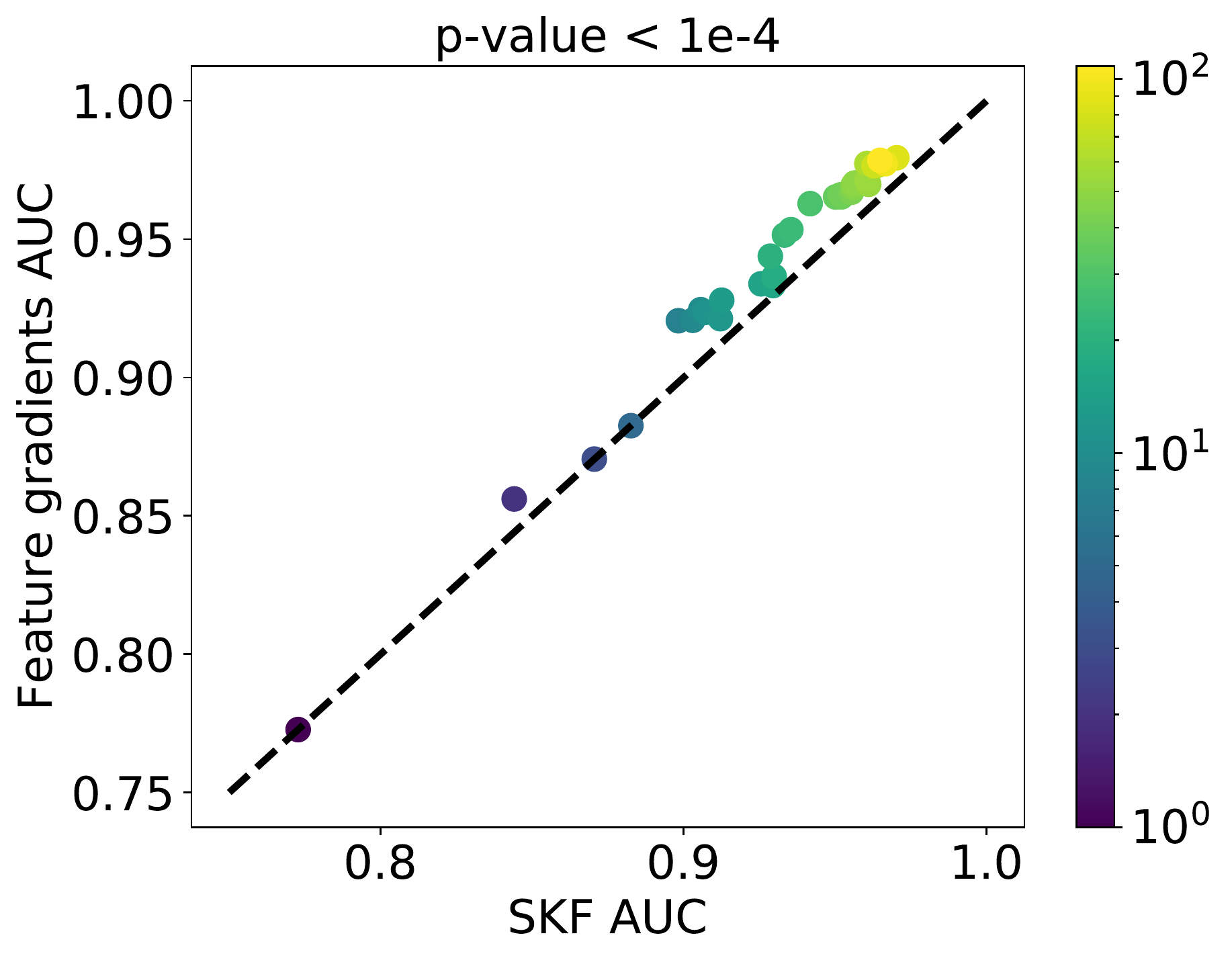}
        \\~\\
    \end{subfigure}
    \begin{subfigure}{0.475\linewidth}
        \includegraphics[width=\linewidth]{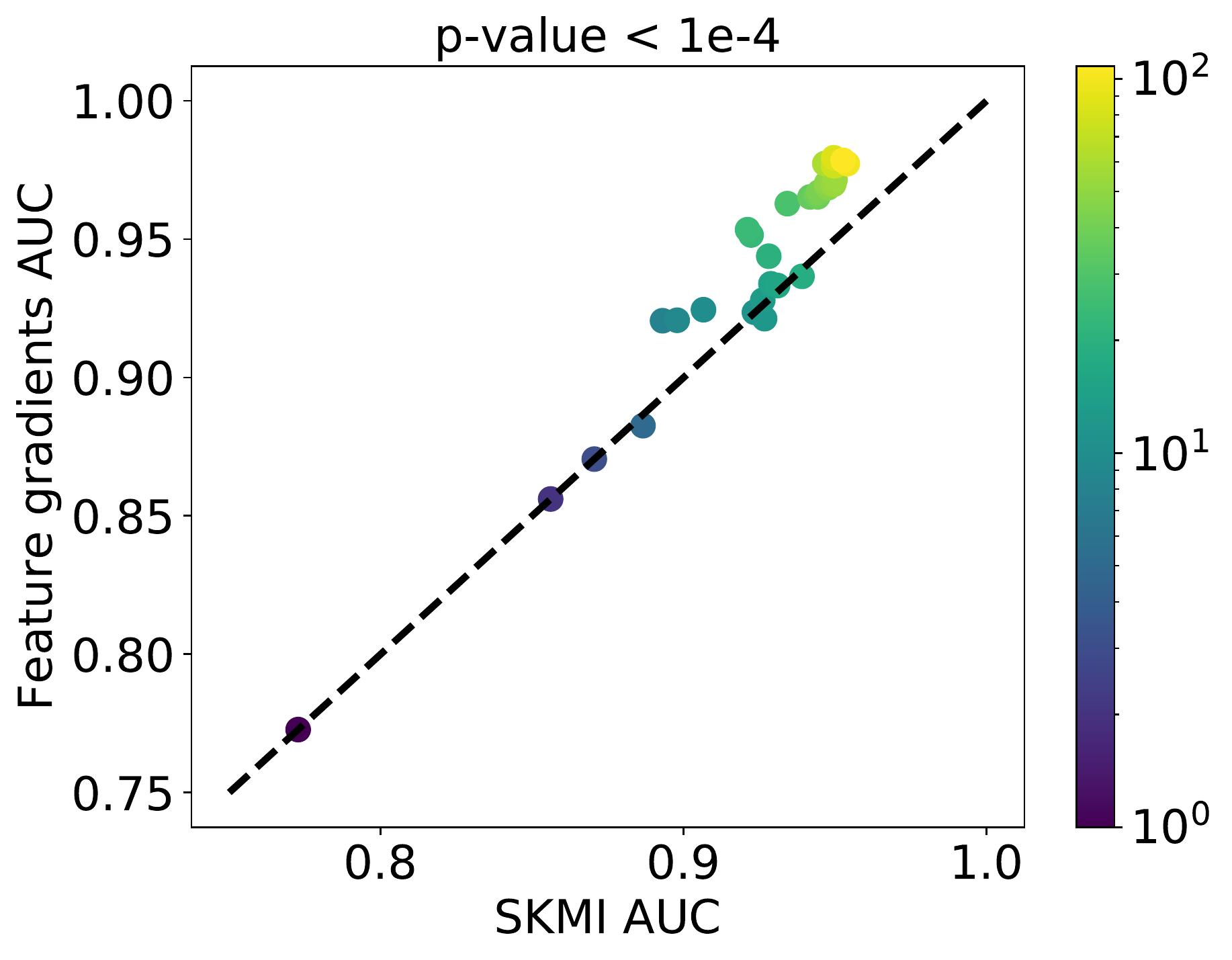}
        \\~\\
    \end{subfigure}
    \\
    \begin{subfigure}{0.475\linewidth}
        \includegraphics[width=\linewidth]{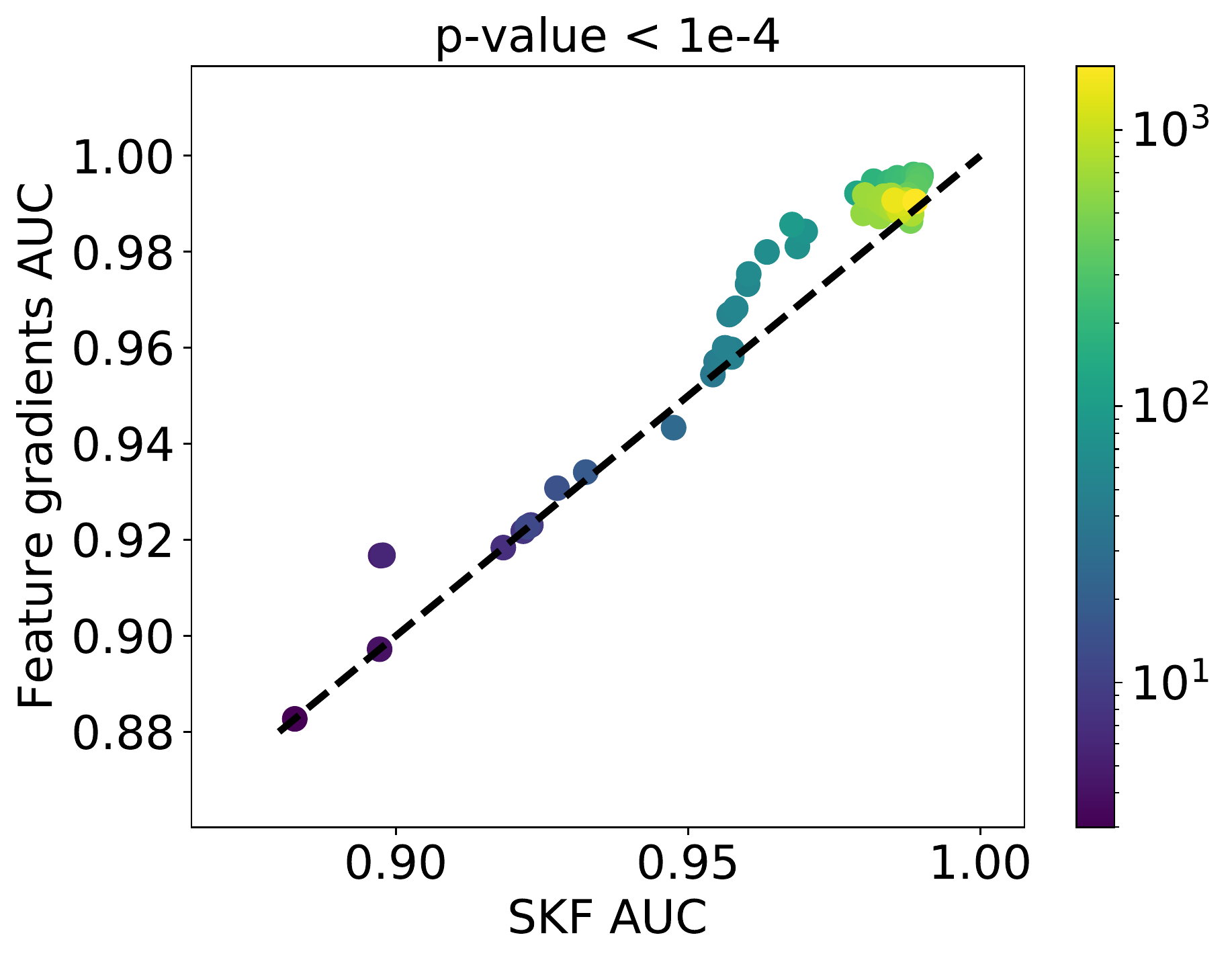}
        \\~\\
    \end{subfigure}
    \begin{subfigure}{0.475\linewidth}
        \includegraphics[width=\linewidth]{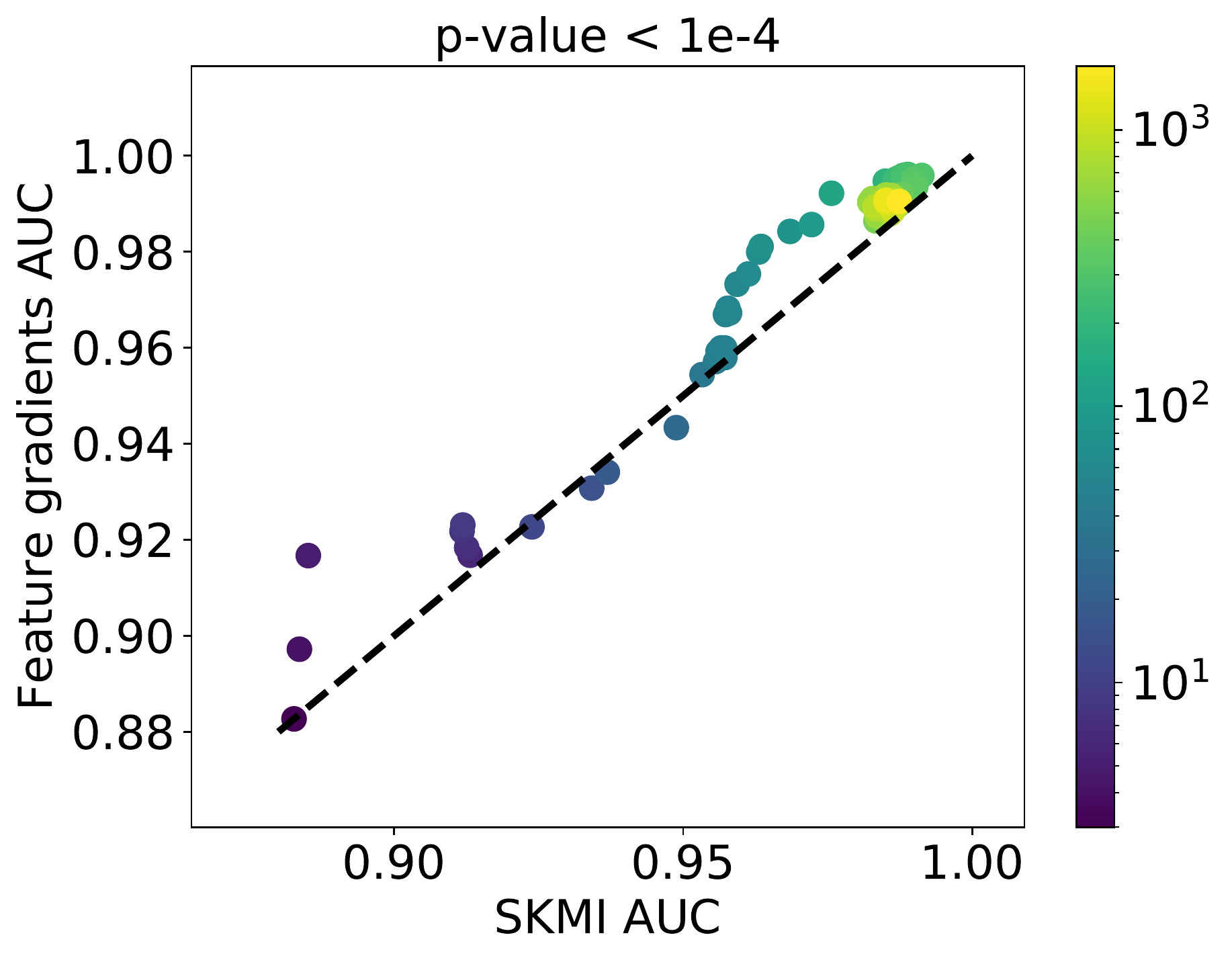}
        \\~\\
    \end{subfigure}
    \caption{\alg{} order 6 performance vs. standard filter methods SKF (left) and SKMI (right) on \emph{mnist35} (top) and \emph{gisette} (bottom). Points above the dashed line indicate where FG outperformed the competitor and vice versa below the dashed line. The color of the points denotes the feature subset size.}
    \label{fig:exp-mnist-gisette}
\end{figure}

\begin{figure}
    \centering
    \includegraphics[width=\linewidth]{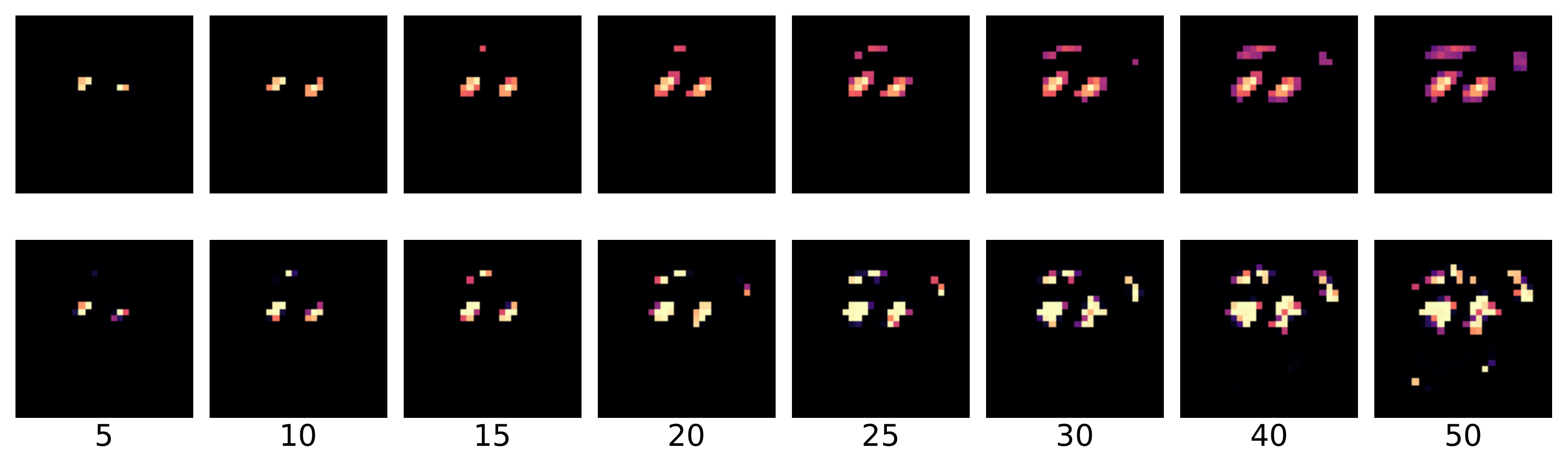}
    \caption{Feature selection on \emph{mnist35}. The different columns represent different feature subset sizes (denoted below the bottom row). The features selected by the filter method utilizing the one-way ANOVA criterion (top) and by \alg{} order 6 (bottom) are colored according to their (normalized) scores.}
    \label{fig:mnist-qual-feat-sel-diff}
\end{figure}

\begin{figure}
    \centering
    \includegraphics[width=0.450\linewidth]{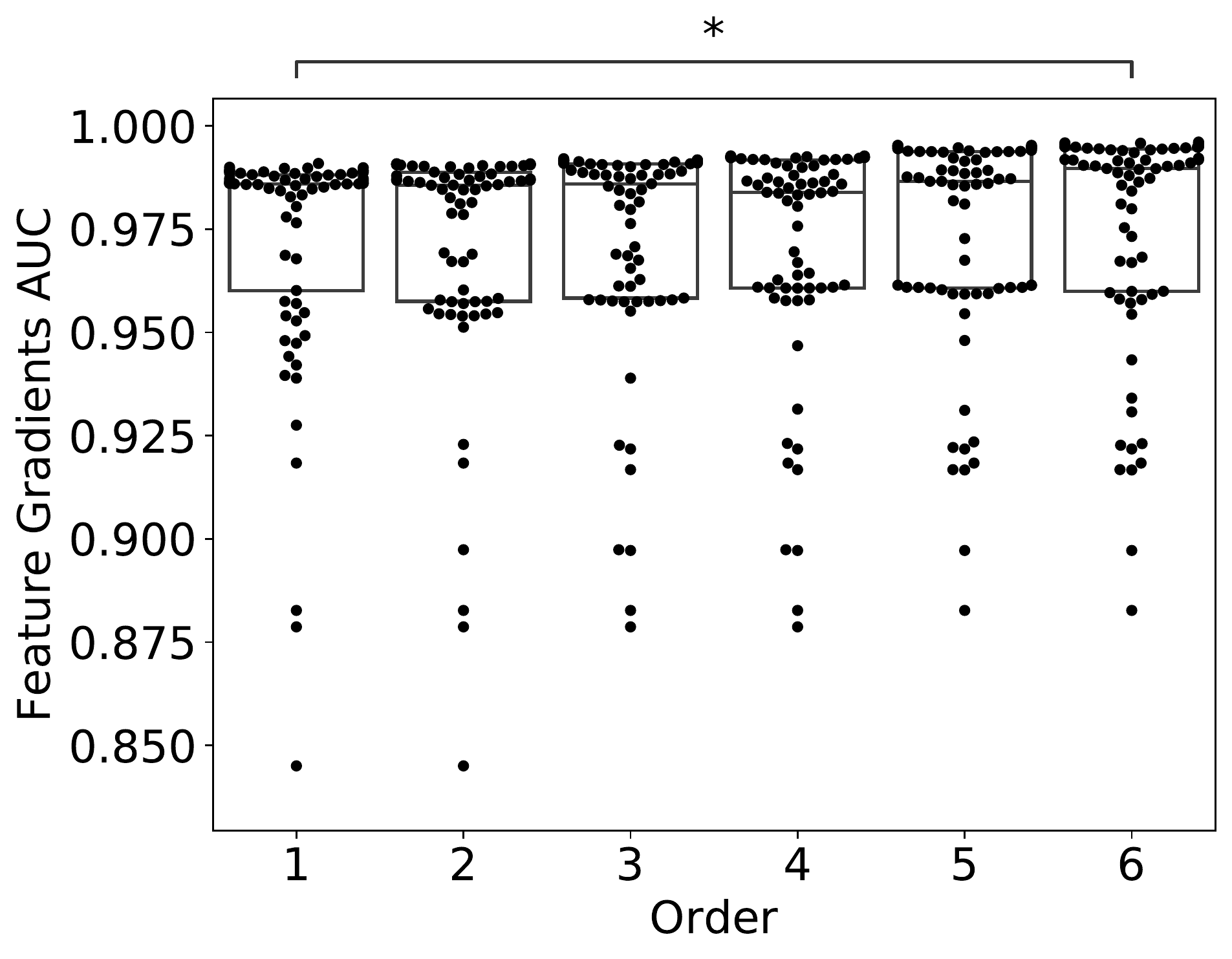}
    \includegraphics[width=0.450\linewidth]{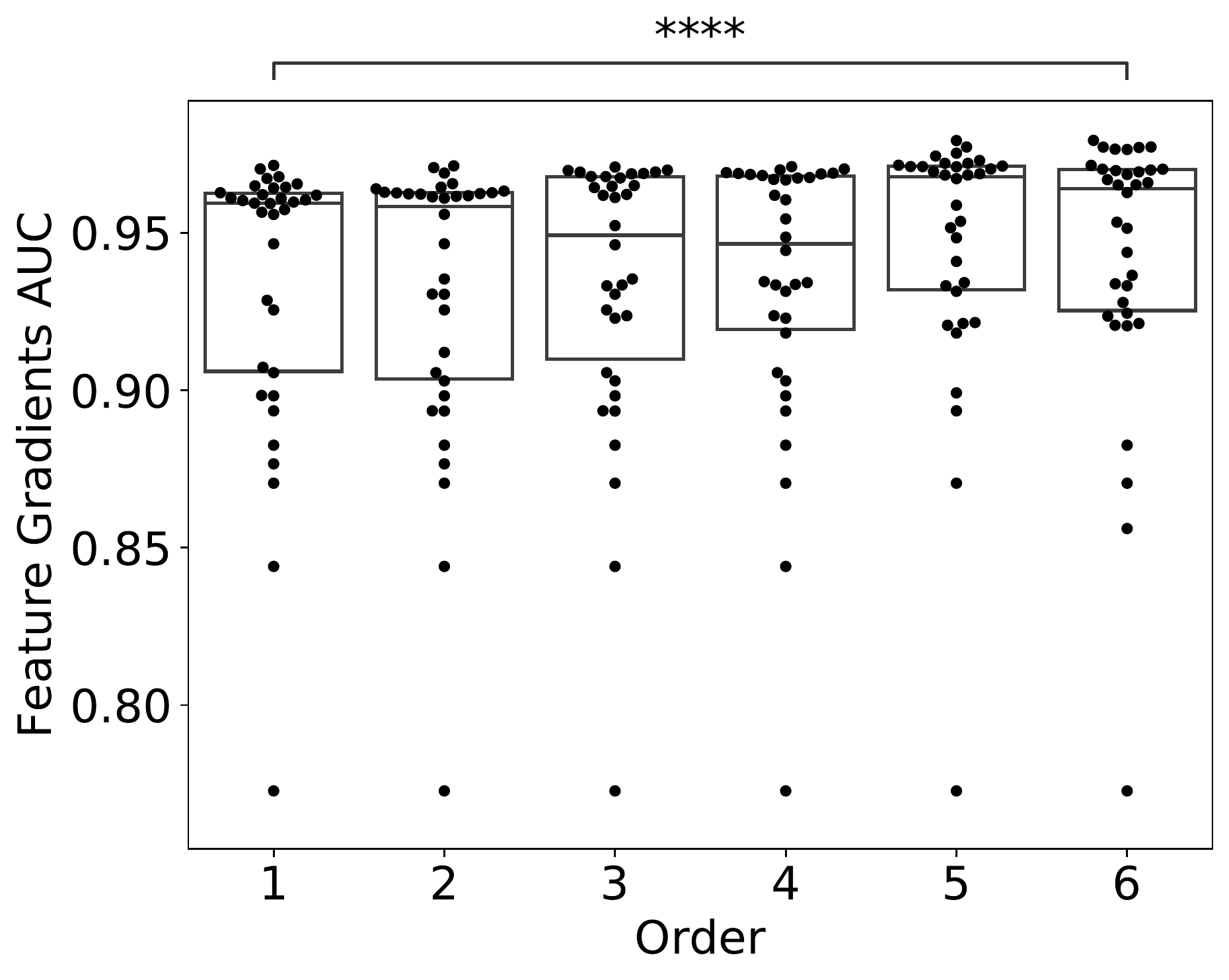}\\
    \caption{%
    {Scores (y-axis) as a function of \alg{} order (x-axis) on \emph{gisette} (left) and \emph{mnist35} (right). The statistical significance of the improvement between order 1 and order 6 was measured using a paired t-test and the p-values are represented with a star at the top of the plot (p-value is $0.043$ for \emph{gisette} and $< 10^{-4}$ for \emph{mnist35})}.}
    \label{fig:order}
\end{figure}

We repeat the same experiments for \emph{gisette}, an even more challenging dataset in which the number of features is close to the number of samples available.
The results in Figure \ref{fig:exp-mnist-gisette} bottom left and right show that our method %
{shows significantly better performance than} the baselines we considered.
As was the case for the previous dataset, increasing the order of \alg{} significantly increases performance (Figure\ref{fig:order} left).

\paragraph{Large-scale experiments.}
Here, we compare \alg{} order 4 against MISSION on one moderate-sized dataset \emph{rcv1} and two large datasets \emph{webspam} and \emph{criteo}.
In the case of \emph{rcv1} a mini-batch size of 1000 was used.
For \emph{webspam}, the mini-batch size was 8, and for \emph{criteo} it was 100.
In these two latter cases, the mini-batch size was selected to be the largest that would fit into GPU memory on an NVIDIA Tesla P100.
Also, in the latter two datasets, gradients were accumulated up to a size of 1000 examples.
After training FG for one epoch to learn the feature subsets, the MISSION code was used to train and evaluate an SGD classifier using the selected features.
To support a fair comparison, the authors' code was modified to support this evaluation\footnote{Specifically, we modified the code to allow a set of features to be input prior to test.}.
MISSION was then also run for the selected number of features.

Figure \ref{fig:exp-rcv-webspam} shows the performance of \alg{} order 4 vs. MISSION on \emph{rcv1} (top left) and \emph{webspam} (top right).
For both datasets, \alg{} is able to locate feature subsets resulting in test set performance significantly better than MISSION (significance level 0.01, with p-values of $0.0064$ and $0.0201$ for \emph{rcv1} and \emph{webspam}, respectively). Notably, for the webspam dataset, \alg{} order 4 is able to maintain 0.92 AUC on test with just 22 features selected out of over 16 million.
\begin{figure}
    \centering
    \includegraphics[width=0.475\linewidth]{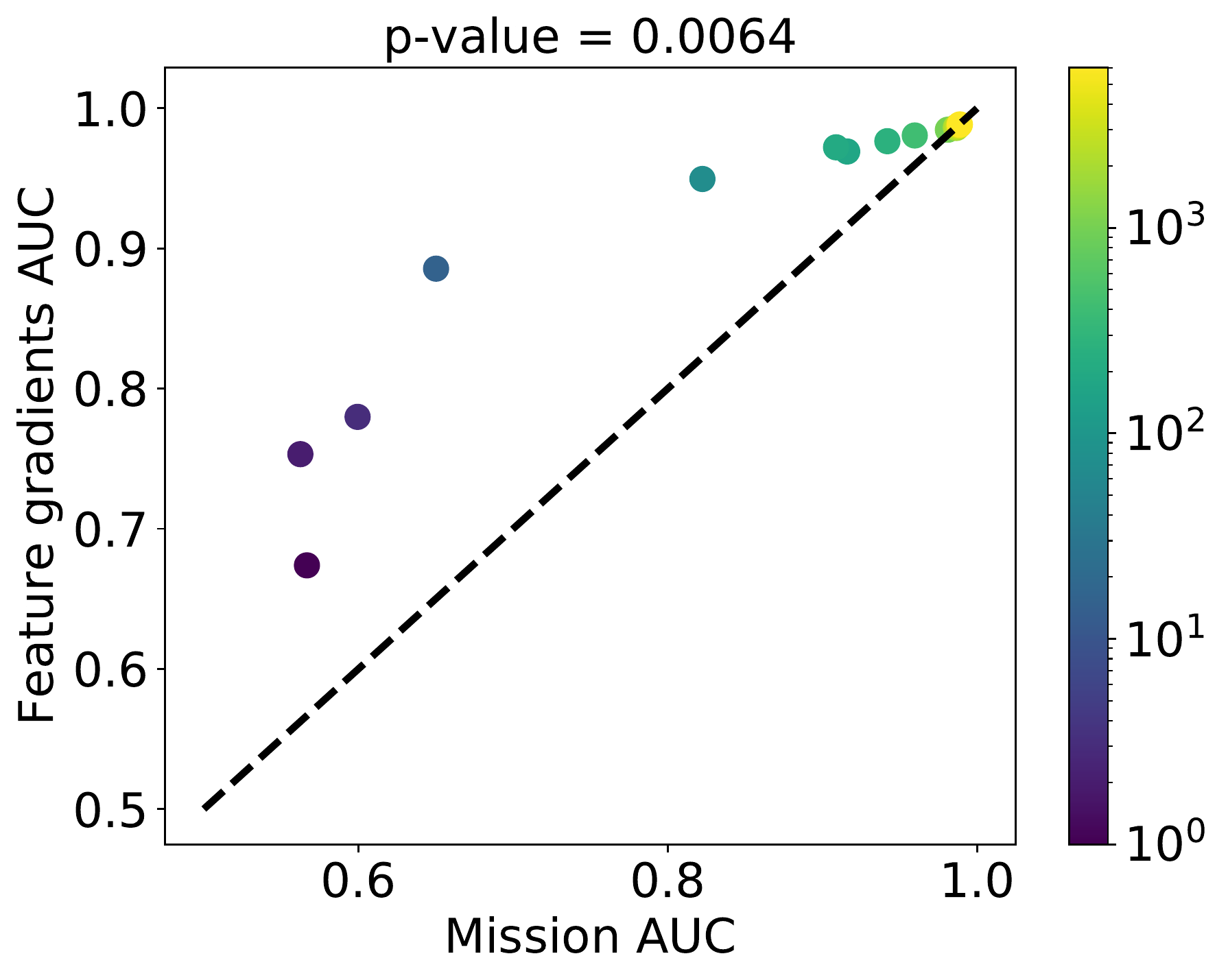}
    \includegraphics[width=0.475\linewidth]{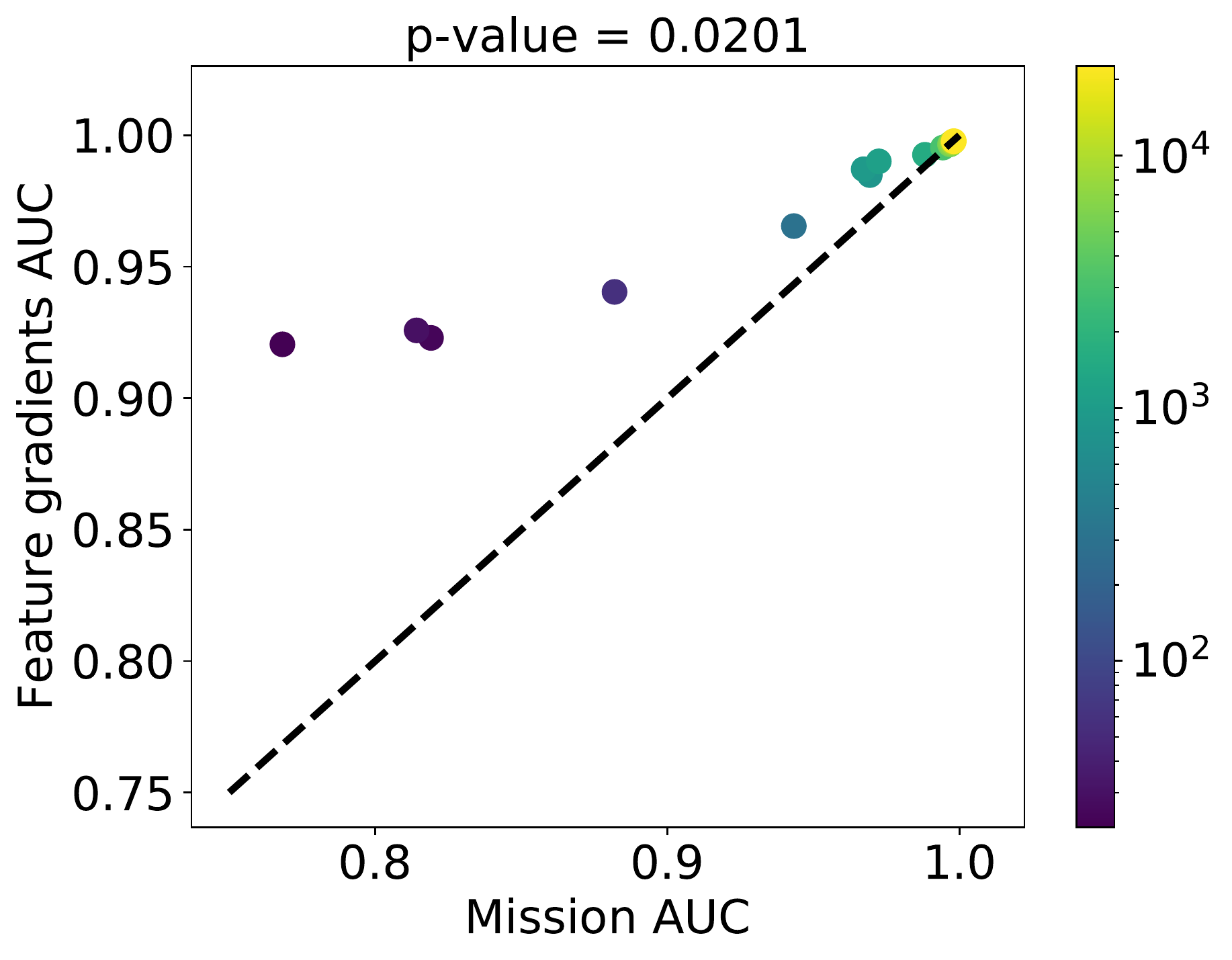}\\
    \includegraphics[width=0.475\linewidth]{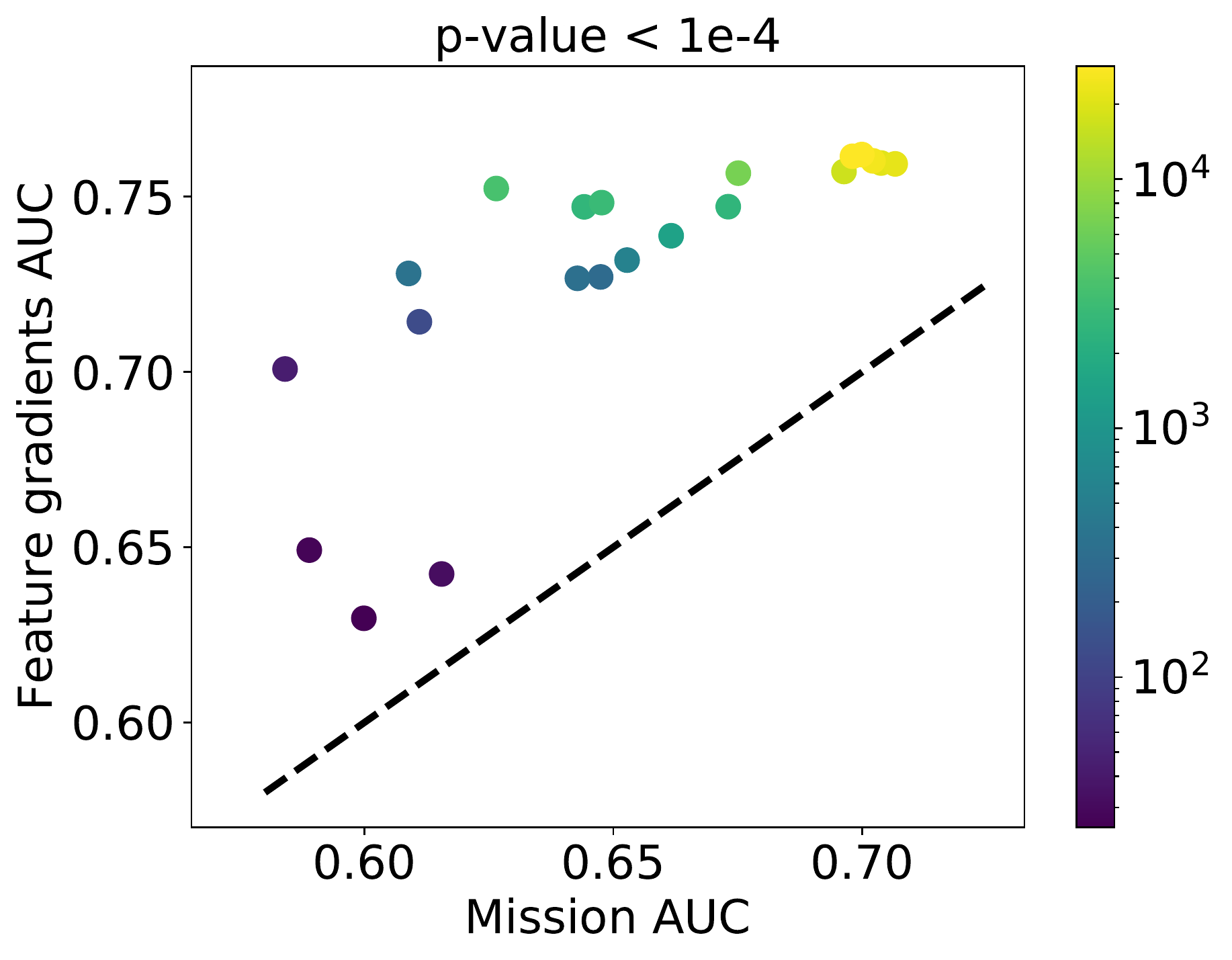}
    \caption{\alg{} order 4 performance vs. MISSION on \emph{rcv1} (top left), \emph{webspam} (top right), and \emph{criteo} (bottom).}
    \label{fig:exp-rcv-webspam}
\end{figure}

Finally, as shown in Figure \ref{fig:exp-rcv-webspam} bottom, \alg{} significantly (p-value $<10^{-4}$) outperforms MISSION on \emph{criteo} across \emph{all} the sparsity levels (\ie number of features) considered.

\section{Discussion}

As the experiments show, our proposed method, Feature Gradients, results in feature subsets that are more correlated with %
{the} targets than standard filter-based methods on small datasets as well as state-of-the-art streaming feature selection on large datasets.
The main limitation of FG lies in the data centering step performed per mini-batch, which %
{results} in large memory utilization. %
{In memory-constrained environments (\eg when using GPUs to accelerate computations), this issue can be mitigated by accumulating gradients. However, avoiding the data centering step entirely would result in better memory usage and faster computation, %
for example, when dealing with sparse data. We will focus on this aspect in future work.}

Feature Gradients can be trivially extended to the multi-classification setting via the standard softmax relaxation applied across multiple outputs. %
Additionally, our method is also capable of ``online'' feature selection, where instances are presented one at a time.
Preliminary results (not shown) indicate that FG is capable of tracking which feature subsets are most correlated with the target as a function of time on \emph{criteo}.
A promising direction would be to combine FG with dynamic feature engineering wherein features are %
{iteratively} constructed, %
{rather than first expanded and then filtered, as done in this paper}.

\bibliography{fg}
\bibliographystyle{abbrvnat}

\newpage
\clearpage

\end{document}